\newtheorem{theorem}{Theorem}[section]
\newtheorem{lemma}{Lemma}[section]
\newtheorem{definition}{Definition}[section]
\newtheorem{coro}{Corollary}[section]
\begin{document}
\title{Theoretical Properties for Neural Networks with Weight Matrices of Low Displacement Rank}
\author{Liang Zhao$^{[1],[a]}$,
Siyu Liao$^{[2],[b]}$, Yanzhi Wang$^{[3],[c]}$, 
Zhe Li$^{[3], [d]}$,\\
Jian Tang$^{[3],[e]}$, Victor Pan$^{[4],[f]}$, and Bo Yuan
$^{[5],[g]}$ 
\\
\and\\
$^{[1]}$ Department of Mathematics, Graduate Center of the City University of New York \\
$^{[2]}$ Department of Computer Science, Graduate Center of the City University of New York \\
$^{[3]}$ Department of Electrical Engineering and Computer Science, Syracuse Univ.\\
$^{[4]}$ Departments of Mathematics and Computer Science, \\
Lehman College and the Graduate Center of the City University of New York\\
$^{[5]}$ Department of Electrical Engineering, City College of the City University of New York \\
$^{[a]}$ lzhao1@gradcenter.cuny.edu
$^{[b]}$ sliao2@gradcenter.cuny.edu\\
$^{[c]}$ ywang393@syr.edu
$^{[d]}$ zli89@syr.edu
$^{[e]}$ jtang02@syr.edu\\
$^{[f]}$ victor.pan@lehman.cuny.edu
$^{[g]}$ byuan@ccny.cuny.edu\\
} 
\date{}
\maketitle

\begin{abstract}
Recently low displacement rank (LDR) matrices, or so-called structured matrices, have been proposed to compress large-scale neural networks. Empirical results have shown that neural networks with weight matrices of LDR matrices, referred as LDR neural networks, can achieve significant reduction in space and computational complexity while retaining high accuracy. We formally study LDR matrices in deep learning. First, we prove the universal approximation property of LDR neural networks with a mild condition on the displacement operators. We then show that the error bounds of LDR neural networks are as efficient as general neural networks with both single-layer and multiple-layer structure. Finally, we propose back-propagation based training algorithm for general LDR neural networks.

\paragraph{Keywords:}Deep learning, Matrix displacement, Structured matrices

\end{abstract}



\section{Introduction}\label{sintro}
Neural networks, especially large-scale deep neural networks, have made remarkable success in various applications such as computer vision, natural language processing, etc. \cite{krizhevsky2012imagenet}\cite{sutskever2014sequence}. However, large-scale neural networks are both memory-intensive and computation-intensive, thereby posing severe challenges when deploying those large-scale neural network models on memory-constrained and energy-constrained embedded devices. To overcome these limitations, many studies and approaches, such as connection pruning \cite{han2015deep}\cite{gong2014compressing}, low rank approximation \cite{denton2014exploiting}\cite{jaderberg2014speeding}, sparsity regularization \cite{wen2016learning}\cite{liu2015sparse} etc., have been proposed to reduce the model size of large-scale (deep) neural networks.

\textbf{LDR Construction and LDR Neural Networks:} Among those efforts, \emph{low displacement rank (LDR) construction} is a type of structure-imposing technique for network model reduction and computational complexity reduction. By regularizing the weight matrices of neural networks using the format of LDR matrices (when weight matrices are square) or the composition of multiple LDR matrices (when weight matrices are non-square), a \emph{strong structure} is naturally imposed to the construction of neural networks. Since an LDR matrix typically requires $O(n)$ independent parameters and exhibits fast matrix operation algorithms \cite{bini2012polynomial}, an immense space for network model and computational complexity reduction can be enabled. Pioneering work in this direction \cite{cheng2015exploration}\cite{sindhwani2015structured} applied special types of LDR matrices (structured matrices), such as circulant matrices and Toeplitz matrices, for weight representation. Other types of LDR matrices exist such as Cauchy matrices, Vandermonde matrices, etc., as shown in Figure 1.
\begin{figure}
\label{fstr}
\centering
\includegraphics[scale=0.5]{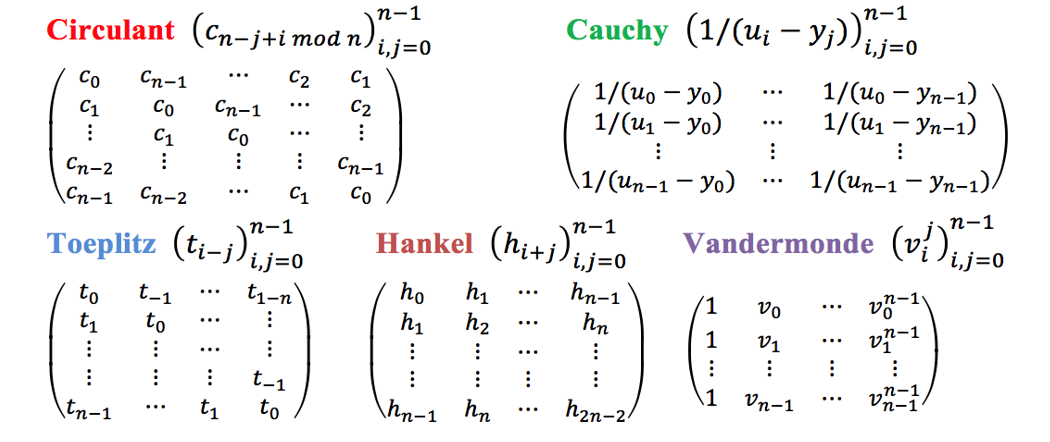}
\caption{Examples of commonly used LDR (structured) matrices, i.e.,
circulant, Cauchy, Toeplitz, Hankel, and Vandermonde matrices.
}
\end{figure}

\textbf{Benefits of LDR Neural Networks:} Compared with other types of network compression approaches, the LDR construction shows several unique advantages. First, unlike heuristic weight-pruning methods \cite{han2015deep}\cite{gong2014compressing} that produce irregular pruned networks, the LDR construction approach always guarantees the strong structure of the trained network, thereby avoiding the storage space and computation time overhead incurred by the complicated indexing process. Second, as a ``train from scratch" technique, LDR construction does not need extra re-training, and hence eliminating the additional complexity to the training process. Third, the reduction in space complexity and computational complexity by using the structured weight matrices are significant. Different from other network compression approaches that can only provide a heuristic compression factor, the LDR construction can enable the model reduction and computational complexity reduction in Big-O complexity: The storage requirement is reduced from $O(n^2)$ to $O(n)$, and the computational complexity can be reduced from $O(n^2)$ to $O(n\log n)$ or $O(n\log^2 n)$ because of the existence of fast matrix-vector multiplication algorithm \cite{bini2012polynomial}\cite{bini1996polynomial} for LDR matrices. For example, when applying structured matrices to the fully-connected layers of AlexNet using ImageNet dataset \cite{deng2009imagenet}, the storage requirement can be reduced by more than 4,000X while incurring negligible degradation in overall accuracy \cite{cheng2015exploration}.

\textbf{Motivation of This Work:} Because of its inherent structure-imposing characteristic, convenient re-training-free training process and unique capability of simultaneous Big-O complexity reduction in storage and computation, LDR construction is a promising approach to achieve high compression ratio and high speedup for a broad category of network models. However, since imposing the structure to weight matrices results in substantial reduction of weight storage from $O(n^2)$ to $O(n)$, cautious researchers need to know whether the neural networks with LDR construction, referred to as LDR neural networks, will consistently yield the similar accuracy as compared with the uncompressed networks. Although \cite{cheng2015exploration}\cite{sindhwani2015structured} have already shown that using LDR construction still results the same accuracy or minor degradation on various datasets, such as ImageNet \cite{deng2009imagenet}, CIFAR \cite{krizhevsky2009learning} etc., the \emph{theoretical analysis}, which can provide the mathematically solid proofs that the LDR neural networks can converge to the same ``effectiveness" as the uncompressed neural networks, is still very necessary in order to promote the wide application of LDR neural networks for emerging and larger-scale applications.

\textbf{Technical Preview and Contributions:} To address the above necessity, in this paper we study and provide a solid theoretical foundation of LDR neural networks on the ability to approximate an arbitrary continuous function, the error bound for function approximation, applications on shallow and deep neural networks, etc. More specifically, the main contributions of this paper include:
\begin{itemize}
\item
We prove the \emph{universal approximation property} for LDR neural networks, which states that the LDR neural networks could approximate an arbitrary continuous function with arbitrary accuracy given enough parameters/neurons. In other words, the LDR neural network will have the same ``effectiveness" of classical neural networks without compression. This property serves as the theoretical foundation of the potential broad applications of LDR neural networks.
\item
We show that, for LDR matrices defined by $O(n)$ parameters, the corresponding LDR neural networks are still capable of achieving integrated squared error of order $O(1/n)$, which is identical to the error bound of unstructured weight matrices-based neural networks, thereby indicating that there is essentially no loss for restricting to the weight matrices to LDR matrices.
\item
We develop a universal training process for LDR neural networks with computational complexity reduction compared with backward propagation process for classical neural networks. The proposed algorithm is the generalization of the training process in \cite{cheng2015exploration}\cite{sindhwani2015structured} that restricts the structure of weight matrices to circulant matrices or Toeplitz matrices.
\end{itemize}
\textbf{Outline:} The paper is outlined as follows. In Section 2 we review the related work on this topic. Section 3 presents necessary definitions and properties of matrix displacement and LDR neural networks. The problem statement is also presented in this section. In Section 4 we prove the universal approximation property for a broad family of LDR neural networks. Section 5 addresses the approximation potential (error bounds) with a limited amount of neurons on shallow LDR neural networks and deep LDR neural networks, respectively. The proposed detailed procedure for training general LDR neural networks are derived in Section 6. Section 7 concludes the article.

\section{Related Work}
\label{srw}

\textbf{\textit{Universal Approximation \& Error Bound Analysis: }}For feedforward neural networks with one hidden layer, \cite{cybenko1989approximation} and \cite{hornik1989multilayer} proved separately the universal approximation property, which guarantees that for any given continuous function or decision function and any error bound $\epsilon>0$, there always exists a single-hidden layer neural network that approximates the function within $\epsilon$ integrated error. However, this property does not specify the number of neurons needed to construct such a neural network. In practice, there must be a limit on the maximum amount of neurons due to the computational limit. Moreover, the magnitude of the coefficients can be neither too large nor too small. To address these issues for general neural networks, \cite{hornik1989multilayer} proved that it is sufficient to approximate functions with weights and biases whose absolute values are bounded by a constant (depending on the activation function). \cite{hornik1991approximation} further extended this result to an arbitrarily small bound. \cite{barron1993universal}
showed that feedforward networks with one layer of sigmoidal nonlinearities achieve an integrated squared error with order of $O(1/n)$, where $n$ is the number of neurons.

More recently, several interesting results were published on the approximation capabilities of deep neural networks. \cite{delalleau2011shallow} have shown that there exist certain functions that can be approximated by three-layer neural networks with a polynomial amount of neurons, while two-layer neural networks require exponentially larger amount to achieve the same error. \cite{montufar2014number} and \cite{telgarsky2016benefits} have shown the exponential increase of linear regions as neural networks grow deeper. \cite{liang2016deep} proved that with $\log(1/\epsilon)$ layers, the neural network can achieve the error bound $\epsilon$ for any continuous function with $O(polylog (\epsilon))$ parameters in each layer.

\textbf{\textit{LDR Matrices in Neural Networks: }}\cite{cheng2015exploration} have analyzed the effectiveness of replacing conventional weight matrices in fully-connected layers with circulant matrices, which can reduce the time complexity from $O(n^2)$ to $O(n\log n)$, and the space complexity from $O(n^2)$ to $O(n)$, respectively. \cite{sindhwani2015structured} have demonstrated significant benefits of using Toeplitz-like matrices to tackle the issue of large space and computation requirement for neural networks training and inference. Experiments show that the use of matrices with low displacement rank offers superior tradeoffs between accuracy and time/space complexity.
\section{Preliminaries on LDR Matrices and Neural Networks}
\label{spre}
\subsection{Matrix Displacement}
An $n\times n$ matrix $\textbf{M}$ is called a \textit{structured matrix} when it has a low displacement rank $\gamma$ \cite{bini2012polynomial}. More precisely, with the proper choice of operator matrices $\textbf{A}$ and $\textbf{B}$, if the Sylvester displacement
\begin{equation}
\nabla_{\textbf{A},\textbf{B}}(\textbf{M}):= \textbf{AM} - \textbf{MB}
\end{equation}
and the Stein displacement
\begin{equation}
\Delta_{\textbf{A},\textbf{B}}(\textbf{M}):= \textbf{M}-\textbf{AMB}
\end{equation}
 of matrix $\textbf{M}$ have a rank $\gamma$ bounded by a value that is independent of the size of $\textbf{M}$, then matrix $\textbf{M}$ is referred to as a \textit{matrix with a low displacement rank} \cite{bini2012polynomial}. In this paper we will call these matrices as \textit{LDR matrices}. Even a full-rank matrix may have small displacement rank with appropriate choice of displacement operators $(\textbf{A}, \textbf{B})$. Figure 1 illustrates a series of commonly used structured matrices, including a circulant matrix, a Cauchy matrix, a Toeplitz matrix, a Hankel matrix, and a Vandermonde matrix, and Table 1 summarizes their displacement ranks and corresponding displacement operators.

\begin{table}
\label{table1}
\begin{center}
\begin{tabular}{|c|c|c|c|}
\hline
\multicolumn{2}{|c|}{\bf Operator Matrix} & {\bf 
Structured} & {\bf Rank of}\\
{\bf A} & {\bf B} & {\bf Matrix M} & ${\bf \Delta_{A, B}(M)}$\\\hline
${\bf Z_1}$ & ${\bf Z_0}$ & Circulant & $\le$2\\\hline
${\bf Z_1}$ & ${\bf Z_0}$ & Toeplitz & $\le$2\\\hline
${\bf Z_0}$ & ${\bf Z_1}$ & Henkel & $\le$2\\\hline
${\bf diag(t)}$& ${\bf Z_0}$ & Vandermonde & $\le$1\\\hline
${\bf diag(s)}$& ${\bf diag(t)}$ & Cauchy & $\le$1\\\hline
\end{tabular}
\end{center}
\caption{Pairs of Displacement Operators and Associated Structured Matrices. ${\bf Z_0}$ and ${\bf Z_1}$ represent the 0-unit-circulant matrix and the 1-unit-circulant matrix respectively, and vector ${\bf s}$ and ${\bf t}$ denote vectors defining Vandermonde and Cauchy matrices (cf. \cite{sindhwani2015structured}).}
\end{table}

The general procedure of handling LDR matrices generally takes three steps: \textit{Compression, Computation with Displacements, Decompression}.
Here compression means to obtain a low-rank displacement of the matrices, and decompression means to converting the results from displacement computations to the answer to the original computational problem. In particular, if one of the displacement operator has the property that its power equals the identity matrix, then one can use the following method to decompress directly:
\begin{lemma}
\label{ldisp}
If $\textrm{\bf{A}}$ is an $a$-potent matrix (i.e., $\textrm{\bf{A}}^q=a\textrm{\bf{I}}$ for some positive integer $q\le n$), then
\begin{equation}
\textrm{\bf{M}} = \Big[\sum_{k=0}^{q-1}\textrm{\bf{A}}^k\Delta_{\textrm{\bf{A}},\textrm{\bf{B}}}(\textrm{\bf{M}})\textrm{\bf{B}}^k\Big](\textrm{\bf{I}} - a\textrm{\bf{B}}^q)^{-1}.
\end{equation}
\end{lemma}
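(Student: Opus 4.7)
The approach is to exploit a telescoping identity obtained by sandwiching the Stein displacement $\Delta_{\mathbf{A},\mathbf{B}}(\mathbf{M}) = \mathbf{M} - \mathbf{A}\mathbf{M}\mathbf{B}$ between powers of $\mathbf{A}$ and $\mathbf{B}$. The plan is to show that the summation in the bracket collapses to $\mathbf{M}(\mathbf{I} - a\mathbf{B}^q)$, after which right-multiplying by $(\mathbf{I} - a\mathbf{B}^q)^{-1}$ yields the formula.

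Concretely, first I would multiply the displacement identity on the left by $\mathbf{A}^k$ and on the right by $\mathbf{B}^k$, obtaining
\begin{equation*}
\mathbf{A}^k \Delta_{\mathbf{A},\mathbf{B}}(\mathbf{M}) \mathbf{B}^k \;=\; \mathbf{A}^k\mathbf{M}\mathbf{B}^k \;-\; \mathbf{A}^{k+1}\mathbf{M}\mathbf{B}^{k+1},
\end{equation*}
which shows that consecutive terms cancel. Summing from $k=0$ to $k=q-1$ telescopes the right-hand side down to $\mathbf{M} - \mathbf{A}^q\mathbf{M}\mathbf{B}^q$. Using the $a$-potency hypothesis $\mathbf{A}^q = a\mathbf{I}$, this becomes $\mathbf{M} - a\mathbf{M}\mathbf{B}^q = \mathbf{M}(\mathbf{I} - a\mathbf{B}^q)$, so
\begin{equation*}
\sum_{k=0}^{q-1} \mathbf{A}^k \Delta_{\mathbf{A},\mathbf{B}}(\mathbf{M}) \mathbf{B}^k \;=\; \mathbf{M}(\mathbf{I} - a\mathbf{B}^q).
\end{equation*}
Right-multiplying both sides by $(\mathbf{I} - a\mathbf{B}^q)^{-1}$ gives the claimed expression.

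The only nontrivial ingredient is the invertibility of $\mathbf{I} - a\mathbf{B}^q$, which I expect is an implicit standing assumption (equivalently, that $a$ is not an eigenvalue of $\mathbf{B}^q$, or that $\mathbf{A}$ and $\mathbf{B}$ share no common eigenstructure of the offending kind); I would add one sentence noting that this is where the formula's well-posedness lives, and that for the standard operators in Table 1 (e.g.\ $\mathbf{Z}_0$, $\mathbf{Z}_1$, $\mathrm{diag}(\mathbf{t})$) this non-degeneracy condition is satisfied. Everything else is a one-line telescoping argument, so the proof is essentially a three-step computation with no real obstacle beyond flagging that invertibility hypothesis.
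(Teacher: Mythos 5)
Your telescoping argument is correct and complete: each term $\mathbf{A}^k\Delta_{\mathbf{A},\mathbf{B}}(\mathbf{M})\mathbf{B}^k$ indeed equals $\mathbf{A}^k\mathbf{M}\mathbf{B}^k-\mathbf{A}^{k+1}\mathbf{M}\mathbf{B}^{k+1}$, the sum collapses to $\mathbf{M}(\mathbf{I}-a\mathbf{B}^q)$ using $\mathbf{A}^q=a\mathbf{I}$, and the invertibility of $\mathbf{I}-a\mathbf{B}^q$ that you rightly flag is exactly the nondegeneracy condition the paper assumes elsewhere (condition (ii) of Theorem \ref{tdist}). The paper itself gives no argument and simply cites Corollary 4.3.7 of \cite{bini2012polynomial}; your computation is precisely the standard proof behind that corollary, so you have in effect supplied the derivation the paper omits.
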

\begin{proof}
See Corollary 4.3.7 in \cite{bini2012polynomial}.
\end{proof}

One of the most important characteristics of structured matrices is their low number of independent variables. The number of independent parameters is $O(n)$ for an $n$-by-$n$ structured matrix instead of the order of $n^2$, which indicates that the storage complexity can be potentially reduced to $O(n)$. Besides, the computational complexity for many matrix operations, such as matrix-vector multiplication, matrix inversion, etc., can be significantly reduced when operating on the structured ones. The definition and analysis of structured matrices have been generalized to the case of $n$-by-$m$ matrices where $m\neq n$, e.g., the block-circulant matrices \cite{pan2015estimating}. Our application of LDR matrices to neural networks would be the general $n$-by-$m$ weight matrices. For certain lemmas and theorems such as Lemma \ref{ldisp}, only the form on $n\times n$ square matrices is needed for the derivation procedure in this paper. So we omit the generalized form of such statements unless necessary.

\subsection{LDR Neural Networks}
In this paper we study the viability of applying LDR matrices in neural networks. Without loss of generality, we focus on a feed-forward neural network with one fully-connected (hidden) layer, which is similar network setup as \cite{cybenko1989approximation}. Here the input layer (with $n$ neurons) and the hidden layer (with $kn$ neurons)\footnote{Please note that this assumption does not sacrifice any generality because the $n$-by-$m$ case can be transformed to $n$-by-$kn$ format with the nearest $k$ using zero padding \cite{cheng2015exploration}.} are assumed to be fully connected with a weight matrix $\textbf{W}\in\mathbb{R}^{n\times kn}$ of displacement rank at most $r$ corresponding to displacement operators $(\textbf{A},\textbf{B})$, where $r\ll n$. The domain for the input vector $\textbf{x}$ is the $n$-dimensional hypercube $I^n := [0,1]^n$, and the output layer only contains one neuron. The neural network can be expressed as:
\begin{equation}
\label{nn}
y = G_{\textrm{\bf{W}}, \bm{\theta}}(\textrm{\bf{x}}) = \sum_{j=1}^{kn} \alpha_j\sigma({\textrm{\bf{w}}_j}^T\textrm{\bf{x}}+\theta_j).
\end{equation}
Here $\sigma(\cdot)$ is the activation function, $\textbf{w}_j\in\mathbb{R}^n$ denotes the $j$-th column of the weight matrix $\textbf{W}$, and $\alpha_j, \theta_j\in\mathbb{R}$ for $j=1,...,kn$. When the weight matrix $\textbf{W}=[{\bf w}_1|{\bf w}_2|\cdots|{\bf w}_{kn}]$ has a low-rank displacement, we call it an LDR neural network. Matrix displacement techniques ensure that LDR neural network has much lower space requirement and higher computational speed comparing to classical neural networks of the similar size.

\subsection{Problem Statement}
In this paper, we aim at providing theoretical support on the accuracy of function approximation using LDR neural networks, which represents the ``effectiveness" of LDR neural networks compared with the original neural networks. Given a continuous function $f(\textbf{x})$ defined on $[0,1]^n$, we study the following tasks:
\begin{itemize}
\item
For any $\epsilon>0$, find an LDR weight matrix $\textbf{W}$ so that the function defined by equation (4) satisfies
\begin{equation}
\label{eeb}
\max_{{\bf x}\in [0,1]^n}|f({\bf x}) - G_{\textrm{\bf{W}}, \bm{\theta}}({\bf x})|<\epsilon.
\end{equation}
\item
Fix a positive integer $n$, find an upper bound $\epsilon$ so that for any continuous function $f(\textbf{x})$ there exists a bias vector $\bm{\theta}$ and an LDR matrix with at most $n$ rows satisfying equation \eqref{eeb}.
\item
Find a multi-layer LDR neural network that achieves error bound \eqref{eeb} but with fewer parameters.
\end{itemize}
The first task is handled in Section \ref{suap}, which is the \emph{universal approximation property} of LDR neural networks. It states that the LDR neural networks could approximate an arbitrary continuous function arbitrarily well and is the underpinning of the widespread applications. The error bounds for shallow and deep neural networks are derived in Section 5. In addition, we derived explicit back-propagation expressions for LDR neural networks in Section \ref{strain}.

\section{The Universal Approximation Property of LDR Neural Networks}
\label{suap}
In this section we will first prove a theorem for \emph{matrix displacements}. Based on the theorem, we prove the universal approximation property of neural networks utilizing only LDR matrices.
\begin{theorem}
\label{tdist}
Let $\mathrm{\bf{A}}$, ${\mathrm{\bf B}}$ be two $n\times n$ non-singular diagonalizable matrices satisfying:

i) ${\mathrm{\bf A}}^q = a{\mathrm{\bf I}}$ for some positive integer $q\le n$ and a scalar $a\neq 0$;
ii) $({\mathrm{\bf I}} - a\mathbf{B}^q)$ is nonsingular;
iii) the eigenvalues of ${\mathbf{\bf B}}$ have distinguishable absolute values.

Define $S$ as the set of matrices ${\mathrm{\bf M}}$ such that $\Delta_{{\mathrm{\bf A}},{\mathrm{\bf B}}}({\mathrm{\bf M}})$ has rank $1$, i.e.,
\begin{equation}
\label{es}
\aligned
S_{{\mathrm{\bf A}},{\mathrm{\bf B}}} = \{{\mathrm{\bf M}}\in\mathbb{R}^{n\times n}|\exists\mathrm{\bf g, h}\in\mathbb{R}^n, \Delta_{{\mathrm{\bf A}},{\mathrm{\bf B}}}( {\mathrm{\bf M}}) = {\mathrm{\bf gh}}^T  \}.
 \endaligned
\end{equation}
Then for any vector ${\mathrm{\bf v}}\in\mathbb{R}^n$, there exists a matrix ${\mathrm{\bf M}}\in S_{{\mathrm{\bf A}},{\mathrm{\bf B}}}$ and an index $v\in\{1,...,n\}$ such that the $i$-th column of ${\mathrm{\bf M}}$ equals vector ${\mathrm{\bf v}}$.
\end{theorem}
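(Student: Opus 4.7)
The plan is to apply Lemma~\ref{ldisp} to write out the $v$-th column of a candidate LDR matrix $\mathbf{M}$ with $\Delta_{\mathbf{A},\mathbf{B}}(\mathbf{M})=\mathbf{g}\mathbf{h}^T$, and then reduce the whole problem to a generic choice of $\mathbf{h}$. Fix any index $v\in\{1,\dots,n\}$ and set $\mathbf{u}_v:=(\mathbf{I}-a\mathbf{B}^q)^{-1}\mathbf{e}_v$, which exists by hypothesis (ii). Lemma~\ref{ldisp} gives
$$\mathbf{M}\mathbf{e}_v=\sum_{k=0}^{q-1}\bigl(\mathbf{h}^T\mathbf{B}^k\mathbf{u}_v\bigr)\,\mathbf{A}^k\mathbf{g}=p(\mathbf{A})\mathbf{g},$$
where $p(x):=\sum_{k=0}^{q-1}c_k\,x^k$ with coefficients $c_k:=\mathbf{h}^T\mathbf{B}^k\mathbf{u}_v$. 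So if $\mathbf{h}$ can be chosen to make $p(\mathbf{A})$ invertible, then $\mathbf{g}:=p(\mathbf{A})^{-1}\mathbf{v}$ forces the $v$-th column of $\mathbf{M}$ to equal $\mathbf{v}$, yielding the desired $\mathbf{M}\in S_{\mathbf{A},\mathbf{B}}$.

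Since $\mathbf{A}$ is diagonalizable, $p(\mathbf{A})$ is invertible iff $p(\mu)\neq 0$ for every eigenvalue $\mu$ of $\mathbf{A}$. The central technical step is to collapse $p(\mu)$ to a tractable form. Using $\mu^q=a$ (from $\mathbf{A}^q=a\mathbf{I}$), we have $(\mu\mathbf{B})^q=a\mathbf{B}^q$, and the finite geometric-sum identity yields
$$\sum_{k=0}^{q-1}(\mu\mathbf{B})^k=(\mathbf{I}-\mu\mathbf{B})^{-1}(\mathbf{I}-a\mathbf{B}^q),$$
where invertibility of $\mathbf{I}-\mu\mathbf{B}$ follows from (ii): if $1/\mu$ were an eigenvalue of $\mathbf{B}$, then $a(1/\mu)^q=1$ would make $\mathbf{I}-a\mathbf{B}^q$ singular. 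Substituting into $p(\mu)=\mathbf{h}^T\bigl(\sum_{k=0}^{q-1}(\mu\mathbf{B})^k\bigr)\mathbf{u}_v$ and commuting the factor $(\mathbf{I}-a\mathbf{B}^q)$ past $(\mathbf{I}-\mu\mathbf{B})^{-1}$ (both are rational functions of $\mathbf{B}$, hence commute), it cancels the $(\mathbf{I}-a\mathbf{B}^q)^{-1}$ hidden inside $\mathbf{u}_v$, leaving
$$p(\mu)=\mathbf{h}^T(\mathbf{I}-\mu\mathbf{B})^{-1}\mathbf{e}_v.$$

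The remaining step is a genericity argument. For each of the finitely many eigenvalues $\mu$ of $\mathbf{A}$, the vector $(\mathbf{I}-\mu\mathbf{B})^{-1}\mathbf{e}_v$ is nonzero (as $(\mathbf{I}-\mu\mathbf{B})^{-1}$ is invertible), so $\{\mathbf{h}:p(\mu)=0\}$ is a proper hyperplane in $\mathbb{R}^n$. The union of these finitely many hyperplanes (one per eigenvalue) is still a proper subset of $\mathbb{R}^n$, so any $\mathbf{h}$ outside this union makes $p(\mathbf{A})$ invertible; then $\mathbf{g}:=p(\mathbf{A})^{-1}\mathbf{v}$ finishes the construction (the degenerate case $\mathbf{v}=\mathbf{0}$ is handled by $\mathbf{M}=\mathbf{0}\in S_{\mathbf{A},\mathbf{B}}$). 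The hard part is the algebraic collapse of $p(\mu)$ in the previous paragraph; without it, one would be forced to analyze the Krylov subspace $\mathrm{span}\{\mathbf{B}^k\mathbf{e}_v\}_{k=0}^{q-1}$ directly, which can have dimension strictly less than $q$ and would pin the coefficients of $p$ to a proper subspace of $\mathbb{R}^q$.
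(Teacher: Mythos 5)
Your proof is correct, and while it shares the paper's overall skeleton---decompress $\mathbf{M}$ via Lemma \ref{ldisp}, reduce the problem to making $p(\mathbf{A})=\sum_{k=0}^{q-1}\bigl(\mathbf{h}^T\mathbf{B}^k\mathbf{u}_v\bigr)\mathbf{A}^k$ invertible, and then choose $\mathbf{h}$ generically---the key technical step is genuinely different and in fact stronger. The paper also reduces matters to showing that the vectors $\mathbf{b}_{ij}=\sum_{k=0}^{q-1}\lambda_i^k\mathbf{B}^k(\mathbf{I}-a\mathbf{B}^q)^{-1}\mathbf{e}_j$ cannot all vanish, but it does so by contradiction: it stacks the allegedly zero columns into a matrix, diagonalizes $\mathbf{B}$, and concludes that the products $\lambda_{i_m}\eta_m$ would furnish too many distinct roots of $1+x+\cdots+x^{q-1}$; this is exactly where hypothesis (iii) (distinct absolute values of the eigenvalues of $\mathbf{B}$) and the diagonalizability of $\mathbf{B}$ get consumed. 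You instead collapse the geometric sum exactly, $\sum_{k=0}^{q-1}(\mu\mathbf{B})^k=(\mathbf{I}-\mu\mathbf{B})^{-1}(\mathbf{I}-a\mathbf{B}^q)$, so that $p(\mu)=\mathbf{h}^T(\mathbf{I}-\mu\mathbf{B})^{-1}\mathbf{e}_v$ is visibly a pairing against a nonzero vector, and the bad set of $\mathbf{h}$ is a finite union of proper subspaces with no case analysis. Your route buys two things: it dispenses with hypothesis (iii) and with the diagonalizability of $\mathbf{B}$ altogether (only (i), (ii), and the diagonalizability of $\mathbf{A}$ are used), and it replaces the paper's somewhat delicate root-counting argument with a one-line cancellation. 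One cosmetic caveat: when $\mathbf{A}$ has complex eigenvalues $\mu$ (which $\mathbf{A}^q=a\mathbf{I}$ permits), the set $\{\mathbf{h}\in\mathbb{R}^n:\mathbf{h}^T(\mathbf{I}-\mu\mathbf{B})^{-1}\mathbf{e}_v=0\}$ is the intersection of the two real hyperplanes given by the real and imaginary parts rather than a single hyperplane, but it is still a proper subspace, so the genericity conclusion stands (the paper glosses over the same point).
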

\begin{proof}
By the property of Stein displacement, any matrix ${\bf M}\in S$ can be expressed in terms of $\textbf{A}$, $\textbf{B}$, and its displacement as follows:
\begin{equation}
\label{estein}
\aligned
\textbf{M}
=& \sum_{k=0}^{q-1}\textbf{A}^k\Delta_{\textbf{A},\textbf{B}}(\textbf{M})\textbf{B}^k(\textbf{I} - a\textbf{B}^q)^{-1}.
\endaligned
\end{equation}
Here we use the property that $\Delta_{\textbf{A},\textbf{B}}(\textbf{M})$ has rank 1, and thus it can be written as ${\bf g\cdot h}^T$. Since $\textbf{A}$ is diagonalizable, one can write its eigen-decomposition as
\begin{equation}
\label{ea}
{\bf A} = {\bf Q}^{-1}{\bf \Lambda Q},
\end{equation}
where ${\bf\Lambda}={\bf diag}(\lambda_1,...,\lambda_n)$ is a diagonal matrix generated by the eigenvalues of ${\bf A}$.
Now define ${\bf e}_j$ to be the $j$-th unit column vector for $j=1,...,n$. Write
\begin{equation}
\label{emtd}
\aligned
{\bf QM}{\bf e}_j
=& {\bf Q}\sum_{k=0}^{q-1}{\bf A}^k\Delta_{{\bf A},{\bf B}}({\bf M}){\bf B}^k({\bf I} - a{\bf B}^q)^{-1}{\bf e}_j\\
 =& {\bf Q}\sum_{k=0}^{q-1}({\bf Q}^{-1}{\bf \Lambda Q})^k{\bf g} {\bf h}^T{\bf B}^k({\bf I} - a{\bf B}^q)^{-1}{\bf e}_j\\
=& \big(\sum_{k=0}^{q-1}s_{{\bf h}, j}{\bf \Lambda}^k\big){\bf Q}{\bf g}.
\endaligned
\end{equation}
Here we use $s_{{\bf h}, j}$ to denote the resulting scalar from matrix product ${\bf h}^T{\bf B}^k({\bf I} - a{\bf B}^q)^{-1}{\bf e}_j$ for $k=1,...,n$. Define ${\bf T}:=({\bf I} - a{\bf B}^q)^{-1}$. In order to prove the theorem, we need to show that there exists a vector ${\bf h}$ and an index $k$ such that the matrix $\sum_{k=0}^{q-1}s_{{\bf h}, j}{\bf \Lambda}^k$ is nonsingular.
In order to distinguish scalar multiplication from matrix multiplication, we use notation $a\circ {\bf M}$ to denote the multiplication of a scalar value and a matrices whenever necessary. Rewrite the expression as
\begin{align*}
&\sum_{k=0}^{q-1}s_{{\bf h}, j}{\bf \Lambda}^k\\
=& \sum_{k=0}^{q-1}{\bf h}^T\cdot \big({\bf B}^k{\bf T}{\bf e}_j\circ {\bf diag}(\lambda_1^k,...,\lambda_n^k)\big)\\
=& \sum_{k=0}^{q-1}{\bf diag}({\bf h}^T\cdot {\bf B}^k\cdot{\bf T}\cdot [\lambda_1^k{\bf e}_j|\cdots|\lambda_n^k{\bf e}_j])\\
=& {\bf diag}\bigg({\bf h}^T\cdot\Big(\sum_{k=0}^{q-1}{\bf B}^k{\bf T}\lambda_1^k{\bf e}_j\Big),...,{\bf h}^T\cdot\Big(\sum_{k=0}^{q-1}{\bf B}^k{\bf T}\lambda_n^k{\bf e}_j\Big)\bigg).
\end{align*}
The diagonal matrix $\sum_{k=0}^{q-1}s_{{\bf h}, j}\Lambda^k$ is nonsingular if and only if all of its diagonal entries are nonzero. Let ${\bf b}_{ij}$ denote the column vector $\sum_{k=0}^{q-1}{\bf B}{\bf T}^k\lambda_i^k{\bf e}_j$. Unless for every $j$ there is an index $i_j$ such that ${\bf b}_{i_jj}={\bf 0}$, we can always choose an appropriate vector ${\bf h}$ so that the resulting diagonal matrix is nonsingular. Next we will show that the former case is not possible using proof by contradiction. Assume that there is a column ${\bf b}_{i_jj}={\bf 0}$ for every $j=1,2,\cdots,n$, we must have:
\begin{align*}
{\bf 0}
=& [{\bf b}_{i_11}|{\bf b}_{i_22}|\cdots|{\bf b}_{i_nn}]\\
=& \Big[\sum_{k=0}^{q-1}{\bf B}^k{\bf T}\lambda_{i_1}^k{\bf e}_1|\cdots|\sum_{k=0}^{q-1}{\bf B}^k{\bf T}\lambda_{i_n}^k{\bf e}_n\Big]\\
=& \sum_{k=0}^{q-1}{\bf B}^k{\bf T} \cdot{\bf diag}(\lambda_{i_1}^k,...,\lambda_{i_n}^k).
\end{align*}
Since ${\bf B}$ is diagonalizable, we write ${\bf B} = {\bf P}^{-1}{\bf \Pi P}$, where ${\bf \Pi} = {\bf diag}(\eta_1,...,\eta_n)$. Also we have ${\bf T}=({\bf I} - a{\bf B}^q)^{-1}={\bf P}^{-1}({\bf I} - a{\bf \Pi}^q)^{-1}{\bf P}$. Then
\begin{align*}
&{\bf 0} = \sum_{k=0}^{q-1}{\bf B}{\bf T}^k{\bf diag}(\lambda_{i_1}^k,...,\lambda_{i_n}^k)\\
&= {\bf P}^{-1}\bigg[\sum_{k=0}^{q-1}{\bf \Pi}^k ({\bf I} - a{\bf \Pi}^q)^{-1}{\bf diag}(\lambda_{i_1}^k,...,\lambda_{i_n}^k)\bigg] {\bf P}\\
&= {\bf P}^{-1}\sum_{k=0}^{q-1}{\bf diag}\Big((\lambda_{i_1}\eta_1)^k,...,(\lambda_{i_n}\eta_n)^k\Big) ({\bf I} - a{\bf \Pi}^q)^{-1}{\bf P}\\
&= {\bf P}^{-1}{\bf diag}\Big(\sum_{k=0}^{q-1}(\lambda_{i_1}\eta_1)^k,...,\sum_{k=0}^{q-1}(\lambda_{i_n}\eta_n)^k\Big) ({\bf I} - a{\bf \Pi}^q)^{-1}{\bf P}.
\end{align*}
This implies that $\lambda_{i_1}\eta_1, ...,\lambda_{i_n}\eta_n$ are solutions to the equation
\begin{equation}
\label{epoly}
1+x+x^2+\cdots+x^{q-1}=0.
\end{equation}
By assumption of matrix ${\bf B}$, $\eta_1,...,\eta_k$ have different absolute values, and so are $\lambda_{i_1}\eta_1,...,\lambda_{i_1}\eta_1$, since all $\lambda_k$ have the same absolute value because ${\bf A}^q = a{\bf I}$. This fact suggests that there are $q$ distinguished solutions of equation \eqref{epoly}, which contradicts the fundamental theorem of algebra. Thus it is incorrect to assume that matrix $\sum_{k=0}^{q-1}s_{{\bf h}, j}{\bf \Lambda}^k$ is singular for all ${\bf h}\in\mathbb{R}^n$. With this property proven, given any vector ${\bf v}\in\mathbb{R}^n$, one can take the following procedure to find a matrix ${\bf M}\in S$ and a index $j$ such that the $j$-th column of ${\bf M}$ equals ${\bf v}$:

i) Find a vector ${\bf h}$ and a index $j$ such that matrix $\sum_{k=0}^{q-1}s_{{\bf h}, j}\Lambda^k$ is non-singular;

ii) By equation \eqref{emtd}, find
\begin{align*}
{\bf g}:=&{\bf Q}^{-1}\big(\sum_{k=0}^{q-1}s_{{\bf h}, j}{\bf \Lambda}^k\big)^{-1}{\bf Q}{\bf T}{\bf v};
\end{align*}
iii) Construct ${\bf M}\in S$ with ${\bf g}$ and ${\bf h}$ by equation \eqref{estein}. Then its $j$-th column will equal to ${\bf v}$.

With the above construction, we have shown that for any vector ${\bf v}\in\mathbb{R}^n$ one can find a matrix ${\bf M}\in S$ and a index $j$ such that the $j$-th column of ${\bf M}$ equals ${\bf v}$, thus the theorem is proved.
\end{proof}
Our main goal of this section is to show that neural networks with many types of LDR matrices (LDR neural networks) can approximate continuous functions arbitrarily well. In particular, we are going to show that Toeplitz matrices and circulant matrices, as specific cases of LDR matrices, have the same property. In order to do so, we need to introduce the following definition of a \textit{discriminatory} function and one key property. (cf. \cite{cybenko1989approximation})
\begin{definition}
A function $\sigma(u):\mathbb{R}\rightarrow\mathbb{R}$ is called as discriminatory if the zero measure is the only measure $\mu$ that satisfies the following property:
\begin{equation}
\int_{I^n}\sigma({\bf w^T x}+ \theta)d\mu({\bf x}) = 0,\forall{\bf w}\in\mathbb{R}^n, \theta\in\mathbb{R}.
\end{equation}
\end{definition}
\begin{lemma}\label{ldisc}
Any bounded, measurable sigmoidal function is discriminatory.
\end{lemma}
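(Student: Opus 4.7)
The plan is to follow the classical route due to Cybenko: for a finite signed regular Borel measure $\mu$ on $I^n$ satisfying the vanishing hypothesis, I will first use the sigmoidal limits to deduce that $\mu$ annihilates every open half-space, then promote this to every bounded measurable function of a linear form $\mathbf{w}^T\mathbf{x}$, and finally apply Fourier uniqueness.

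First I would exploit the sigmoidal asymptotics. For any $\mathbf{w}\in\mathbb{R}^n$ and $\theta,\varphi\in\mathbb{R}$, consider the family $\sigma_\lambda(\mathbf{x}) := \sigma(\lambda(\mathbf{w}^T\mathbf{x}+\theta)+\varphi)$. As $\lambda\to\infty$ this converges pointwise to $1$ on the open half-space $H^+_{\mathbf{w},\theta}=\{\mathbf{x}:\mathbf{w}^T\mathbf{x}+\theta>0\}$, to $0$ on the opposite open half-space, and to $\sigma(\varphi)$ on the separating hyperplane $\Pi_{\mathbf{w},\theta}$. Since $\sigma$ is bounded, the dominated convergence theorem lets me pass to the limit inside the integral, and the hypothesis gives
\begin{equation*}
\mu(H^+_{\mathbf{w},\theta}) + \sigma(\varphi)\,\mu(\Pi_{\mathbf{w},\theta}) = 0
\end{equation*}
for every $\varphi\in\mathbb{R}$. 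Because $\sigma$ is non-constant (it takes distinct limiting values $0$ and $1$), varying $\varphi$ separates the two terms, forcing $\mu(\Pi_{\mathbf{w},\theta})=0$ and $\mu(H^+_{\mathbf{w},\theta})=0$ for every affine half-space.

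Next I would upgrade this to a much richer annihilation statement. For a fixed $\mathbf{w}$, define the bounded linear functional $F_{\mathbf{w}}(h) := \int_{I^n} h(\mathbf{w}^T\mathbf{x})\,d\mu(\mathbf{x})$ on $L^\infty(\mathbb{R})$. The previous step shows $F_{\mathbf{w}}$ vanishes on every indicator $\mathbf{1}_{(-\theta,\infty)}$, hence on indicators of all bounded intervals by taking differences, hence on simple functions by linearity. A standard pointwise approximation together with dominated convergence then shows $F_{\mathbf{w}}(h)=0$ for every bounded measurable $h$. Applying this to the complex exponentials $h(t)=e^{it}$ yields $\int_{I^n} e^{i\mathbf{w}^T\mathbf{x}}\,d\mu(\mathbf{x}) = 0$ for every $\mathbf{w}\in\mathbb{R}^n$, so the Fourier transform of the (compactly supported) finite signed measure $\mu$ vanishes identically. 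By Fourier uniqueness for finite Borel measures, $\mu\equiv 0$, which is exactly the claim.

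The step I expect to be the main obstacle is the clean separation in the limiting identity $\mu(H^+_{\mathbf{w},\theta}) + \sigma(\varphi)\mu(\Pi_{\mathbf{w},\theta}) = 0$: one must use the defining non-triviality of a sigmoidal $\sigma$ (two distinct limit values) to kill the hyperplane term before concluding the half-space term vanishes. Once that decoupling is secured, the remainder is a routine measure-theoretic/Fourier argument, and no additional hypotheses beyond boundedness and measurability of $\sigma$ are needed.
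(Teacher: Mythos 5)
Your argument is correct and is precisely the classical proof of Cybenko, which is exactly what the paper relies on: its ``proof'' of Lemma~\ref{ldisc} is simply a citation to \cite{cybenko1989approximation}, whose Lemma~1 follows the same three steps you give (sigmoidal limits plus dominated convergence to annihilate half-spaces and hyperplanes, decoupling via two distinct values of $\sigma(\varphi)$, bootstrapping through simple functions to $e^{i\mathbf{w}^T\mathbf{x}}$, and Fourier uniqueness). No discrepancy to report.
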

\begin{proof}
The statement of this lemma and its proof is included in \cite{cybenko1989approximation}.
\end{proof}
Now we are ready to present \underline{the universal approximation} \underline{theorem of LDR neural networks} with $n$-by-$kn$ weight matrix $\bf W$:
\begin{theorem}
[Universal Approximation Theorem for LDR Neural Networks]
\label{appr}
Let $\sigma$ be any continuous discriminatory function. For any continuous function $f({\bf x})$ defined on $I^n$,
$\epsilon>0$, and any ${\bf A}, {\bf B}\in\mathbb{R}^{n\times n}$, satisfying assumptions in Theorem \ref{tdist},
then there exists a function $G({\bf x})$ in the form of equation \eqref{nn} so that its weight matrix consists of $k$
submatrices with displacement rank of $1$
and
\begin{equation}
\max_{{\bf x}\in I^n}|G({\bf x}) - f({\bf x})|<\epsilon.
\end{equation}
\end{theorem}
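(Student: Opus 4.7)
The plan is to reduce the LDR universal approximation claim to the classical Cybenko (1989) theorem by using Theorem \ref{tdist} as a column-filling device. Cybenko's argument proceeds via Hahn--Banach and the Riesz representation theorem: the closure in $C(I^n)$ of the linear span of ridge functions $\mathbf{x}\mapsto\sigma(\mathbf{w}^T\mathbf{x}+\theta)$ coincides with all of $C(I^n)$, because any bounded signed measure annihilating every such ridge function must vanish by the discriminatory property (Lemma \ref{ldisc}). Applying this to the given $f$ and $\epsilon>0$, I obtain a finite sum
\[
G_0(\mathbf{x})=\sum_{j=1}^{N}\alpha_j\,\sigma(\mathbf{v}_j^T\mathbf{x}+\theta_j),\qquad \sup_{\mathbf{x}\in I^n}|G_0(\mathbf{x})-f(\mathbf{x})|<\epsilon,
\]
with the weight vectors $\mathbf{v}_j\in\mathbb{R}^n$ \emph{unconstrained}.

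The constraint I must enforce is that the weight matrix $\mathbf{W}$ of the network consist of $k$ blocks, each of Stein displacement rank $1$ with respect to $(\mathbf{A},\mathbf{B})$. This is exactly what Theorem \ref{tdist} supplies: for each $\mathbf{v}_j$ produced above, it yields a matrix $\mathbf{M}_j\in S_{\mathbf{A},\mathbf{B}}$ together with an index $c_j\in\{1,\ldots,n\}$ such that the $c_j$-th column of $\mathbf{M}_j$ equals $\mathbf{v}_j$. I then horizontally concatenate these blocks into $\mathbf{W}=[\mathbf{M}_1\mid\mathbf{M}_2\mid\cdots\mid\mathbf{M}_N]\in\mathbb{R}^{n\times Nn}$, so the weight matrix automatically has the claimed structure with $k=N$ rank-$1$ submatrices.

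It remains to choose output coefficients and biases so that the realized network equals $G_0$ exactly. Writing $\mathbf{W}=[\mathbf{w}_1\mid\cdots\mid\mathbf{w}_{Nn}]$, I set $\tilde\alpha_{(j-1)n+c_j}=\alpha_j$ and $\tilde\theta_{(j-1)n+c_j}=\theta_j$ for $j=1,\ldots,N$, and $\tilde\alpha_i=0$ for every other index (with the remaining biases arbitrary). The network output
\[
G(\mathbf{x})=\sum_{i=1}^{Nn}\tilde\alpha_i\,\sigma(\mathbf{w}_i^T\mathbf{x}+\tilde\theta_i)
\]
then collapses term-by-term to $G_0(\mathbf{x})$, giving $\sup_{\mathbf{x}\in I^n}|G(\mathbf{x})-f(\mathbf{x})|<\epsilon$, which is the desired bound.

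The genuine mathematical difficulty is already absorbed into Theorem \ref{tdist}; without that column-filling guarantee, there would be no a priori reason to believe that restricting weights to columns of rank-$1$ Stein displacement matrices preserves full expressive power. Given that theorem, the remaining steps are essentially bookkeeping: verifying that Cybenko's argument applies (immediate, since the hypothesis supplies a continuous discriminatory $\sigma$), and organizing the block structure together with the zero-padding of the output weights so that $\mathbf{W}$ is honestly of the claimed form while the surviving active neurons reproduce $G_0$.
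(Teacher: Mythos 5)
Your proposal is correct, and it rests on exactly the same two pillars as the paper's proof: Theorem \ref{tdist} as the column-filling device, and the zero-padding of output coefficients so that only the neuron attached to the designated column survives. The difference is purely organizational. The paper inlines Cybenko's functional-analytic argument: it defines the set $U_{I^n}$ of LDR-network functions, supposes it is not dense, extracts an annihilating measure $\mu$ via Hahn--Banach and Riesz, uses Theorem \ref{tdist} plus zero-padding to show every ridge function $\sigma(\mathbf{y}^T\mathbf{x}+\theta)$ lies in $U_{I^n}$, and concludes $\mu=0$ from the discriminatory property. You instead invoke Cybenko's theorem as a black box to get an unconstrained approximant $G_0=\sum_{j=1}^N\alpha_j\sigma(\mathbf{v}_j^T\mathbf{x}+\theta_j)$ and then convert it, term by term, into an LDR network with $k=N$ blocks. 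Your version is arguably cleaner: it sidesteps the (mildly delicate) need to verify that the LDR function class is a linear subspace closed under taking more blocks, it makes the block count explicit, and it is constructive modulo the classical theorem. What the paper's version buys is a self-contained density statement for the LDR class itself, without citing the classical universal approximation theorem as an external result. Both are valid; there is no gap in your argument.
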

\begin{proof}
Denote the $i$-th $n\times n$ submatrix of ${\bf W}$ as ${\bf W}_i$. Then ${\bf W}$ can be written as
\begin{equation}
{\bf W} = \big[{\bf W}_1|{\bf W}_2|...|{\bf W}_k\big].
\end{equation}
Let ${\bf M}$ be any of submatirx ${\bf W}_i$ with displacement rank 1. ${\bf M}$ can be written as
\begin{equation}
\Delta_{{\bf A},{\bf B}}({\bf M}) = {\bf M} - {\bf AMB} = {\bf g}\cdot {\bf h^T},
\end{equation}
where ${\bf g}, {\bf h}\in\mathbb{R}^n$.

Since ${\bf A}^q = {\bf I}$, follow Lemma \ref{ldisp} and we obtain
\begin{equation}
{\bf M} = \Big[\sum_{k=0}^{q-1}{\bf A}^k\Delta_{{\bf A},{\bf B}}({\bf M}){\bf B}^k\Big]({\bf I} - a{\bf B}^q)^{-1}.
\end{equation}

Let $S_{I^n}$ denote the set of all continuous functions defined on $I^n$. Let $U_{I^n}$ be the linear subspace of $S_{I^n}$ that can be expressed in form of equation \eqref{nn} where ${\bf W}$ consists of $k$ sub-matrices with displacement rank 1. We want to show that $U_{I^n}$ is dense in the set of all continuous functions $S_{I^n}$.

Suppose not, by Hahn-Banach Theorem, there exists a bounded linear functional $L\neq 0$ such that $L(\bar{U}(I^n))=0$.
Moreover, By Riesz Representation Theorem, $L$ can be written as
\begin{align*}
L(h) = \int_{I^n}h({\bf x})d\mu({\bf x}), \forall h\in S(I^n),
\end{align*}
for some measure $\mu$.

Next we show that for any ${\bf y}\in\mathbb{R}^n$ and $\theta\in\mathbb{R}$, the function $\sigma({\bf y^Tx}+\theta)$ belongs to the set $U_{I^n}$, and thus we must have
\begin{equation}
\label{cdisc}
\int_{I^n}\sigma({\bf y^Tx}+\theta)d\mu({\bf x})=0.
\end{equation}
For any vector ${\bf y}\in\mathbb{R}^n$, Theorem \ref{tdist} guarantees that there exists an $n\times n$ LDR matrix ${\bf M}=[{\bf b}_1|\cdots|{\bf b}_n]$ and an index $j$ such that ${\bf b}_j={\bf y}$. Now define a vector $(\alpha_1,...,\alpha_n)$ such that $\alpha_j=1$ and $\alpha_1=\cdots=\alpha_n=0$. Also let the value of all bias be $\theta$. Then the LDR neural network function becomes
\begin{equation}
\label{egx}
\aligned
G({\bf x})
=& \sum_{i=1}^n \alpha_i\sigma({{\bf{b}}_i^{\bf{T}}\bf{x}}+\theta)\\
=& \alpha_j\sigma({{\bf{b}}_j^{\bf{T}}\bf{x}}+\theta)= \sigma({\bf y^Tx}+\theta).
\endaligned
\end{equation}
From the fact that $L(G({\bf x}))=0$, we derive that
\begin{align*}
0 =& L(G({\bf x}))\\
=& \int_{I^n}\sum_{i=1}^n \alpha_i\sigma({{\bf{b}}_i^{\bf{T}}\bf{x}}+\theta)
=\int_{I_n}\sigma({\bf y^Tx}+\theta)d\mu({\bf x}).
\end{align*}
Since $\sigma(t)$ is a discriminatory function by Lemma \ref{ldisc}. We can conclude that $\mu$ is the zero measure. As a result, the function defined as an integral with measure $\mu$ must be zero for any input function $h\in S(I^n)$. The last statement contradicts the property that $L\neq 0$ from the Hahn-Banach Theorem, which is obtained based on the assumption that the set $U_{I^n}$ of LDR neural network functions are not dense in $S_{I^n}$. As this assumption is not true, we have the universal approximation property of LDR neural networks.
\end{proof}

Reference work \cite{cheng2015exploration}, \cite{sindhwani2015structured} have utilized a circulant matrix or a Toeplitz matrix for weight representation in deep neural networks. Please note that for the general case of $n$-by-$m$ weight matrices, either the more general Block-circulant matrices should be utilized or padding extra columns or rows of zeroes are needed \cite{cheng2015exploration}. Circulant matrices and Topelitz matrices are both special form of LDR matrices, and thus we could apply the above universal approximation property of LDR neural networks and provide theoretical support for the use of circulant and Toeplitz matrices in \cite{cheng2015exploration}, \cite{sindhwani2015structured}. Although circulant and Toeplitz matrices have displacement rank of 2 instead of 1, the property of Theorem \ref{tdist} still holds, as a Toeplitz matrix is completely determined by its first row and its first column (and a circulant matrix is completely determined by its first row.) Therefore we arrive at the following corollary.

\begin{coro}
Any continuous function can be arbitrarily approximated by neural networks constructed with Toeplitz matrices or circulant matrices (with padding or using Block-circulant matrices).
\end{coro}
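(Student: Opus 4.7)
The plan is to bypass Theorem \ref{tdist} altogether for this corollary, because circulant and Toeplitz matrices already enjoy, by elementary means, the precise property that was the payload of Theorem \ref{tdist}: \emph{for every vector ${\bf v}\in\mathbb{R}^n$ there is a matrix ${\bf M}$ in the class and an index $j$ such that the $j$-th column of ${\bf M}$ equals ${\bf v}$.} For a circulant matrix, the entire matrix is determined by its first column, so I simply take ${\bf M}$ to be the circulant matrix with first column ${\bf v}$ and $j=1$. For a Toeplitz matrix there are $2n-1$ free parameters (the first row together with the first column), so I again let the first column be ${\bf v}$, fill the rest of the first row arbitrarily, and take $j=1$. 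Thus the ``column-realization'' property holds even though the Stein displacement rank can be $2$ rather than $1$.

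Next, I would re-run the proof of Theorem \ref{appr} verbatim with the set $U_{I^n}$ redefined as functions of the form \eqref{nn} whose weight matrix ${\bf W}=[{\bf W}_1|\cdots|{\bf W}_k]$ has each block ${\bf W}_i$ circulant (respectively Toeplitz). Suppose for contradiction that $U_{I^n}$ is not dense in $S_{I^n}$; then by Hahn--Banach and Riesz there is a nonzero signed measure $\mu$ with $\int_{I^n} G({\bf x})\,d\mu({\bf x})=0$ for every $G\in U_{I^n}$. For any ${\bf y}\in\mathbb{R}^n$ and $\theta\in\mathbb{R}$, use the column-realization observation above to build a block whose $j$-th column is ${\bf y}$, set $\alpha_j=1$, all other $\alpha_i=0$, and all biases equal to $\theta$, exactly as in \eqref{egx}. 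This collapses $G$ to $\sigma({\bf y}^T{\bf x}+\theta)$, so $\int_{I^n}\sigma({\bf y}^T{\bf x}+\theta)\,d\mu({\bf x})=0$ for all ${\bf y},\theta$. Invoking Lemma \ref{ldisc} (or the continuous-discriminatory hypothesis on $\sigma$) forces $\mu=0$, contradicting $L\neq 0$.

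Finally, for the non-square $n\times m$ weight matrices used in practice, I would follow the paper's own comment and reduce to the square case either by zero-padding to the nearest multiple of $n$ as in \cite{cheng2015exploration}, or by using block-circulant matrices whose diagonal blocks are circulant; the same column-realization property holds for any fixed block (pick the block where the target column should live and realize ${\bf v}$ there), so the Hahn--Banach argument is unaffected.

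I do not expect a real obstacle here: the only conceptual point worth flagging is that Theorem \ref{appr} as written is stated for rank-$1$ Stein displacements under the hypotheses of Theorem \ref{tdist}, whereas circulant/Toeplitz matrices fall outside that exact hypothesis (they have displacement rank up to $2$, and the operators ${\bf Z}_0,{\bf Z}_1$ do not satisfy the ``distinct absolute value of eigenvalues'' assumption on ${\bf B}$). The resolution is that the abstract machinery of Theorem \ref{tdist} is only needed to supply the column-realization property, and for these two families that property is visibly trivial, so the hypotheses of Theorem \ref{tdist} can be sidestepped rather than verified.
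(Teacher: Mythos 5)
Your proposal is correct and takes essentially the same route as the paper: the paper likewise justifies the corollary by observing that the column-realization conclusion of Theorem \ref{tdist} holds trivially for circulant and Toeplitz matrices (since the first column can be chosen freely), and then reuses the Hahn--Banach/discriminatory-function argument of Theorem \ref{appr}, with padding or block-circulant structure handling the non-square case. You are merely more explicit than the paper in flagging that the hypotheses of Theorem \ref{tdist} themselves are not satisfied by the operators $\mathbf{Z}_0,\mathbf{Z}_1$ and that only its conclusion is needed, which is a fair and accurate reading of what the paper's argument actually uses.
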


\section{Error Bounds on LDR Neural Networks}
\label{seb}
With the universal approximation property proved, naturally we seek ways to provide error bound estimates for LDR neural networks. We are able to prove that for LDR matrices defined by $O(n)$ parameters ($n$ represents the number of rows and has the same order as the number of columns), the corresponding structured neural network is capable of achieving integrated squared error of order $O(1/n)$, where $n$ is the number of parameters. This result is asymptotically equivalent to Barron's aforementioned result on general neural networks, indicating that \underline{there is essentially no loss for restricting to LDR matrices}.

The functions we would like to approximate are those who are defined on a $n$-dimensional ball $B_r = \{{\bf x}\in\mathbb{R}^n: |{\bf x}|\le r\}$ such that $\int_{B_r}|{\bf x}||f({\bf x})|\mu(d{\bf x})\le C$, where $\mu$ is an arbitrary measure normalized so that $\mu(B_r)=1$. Let's call this set $\Gamma_{C, B_r}$.

\cite{barron1993universal} considered the following set of bounded multiples of a sigmoidal function composed with linear functions:
\begin{equation}
G_\sigma = \{\alpha \sigma({\bf y^Tx} + \theta): |\alpha|\le 2C, {\bf y}\in\mathbb{R}^n, \theta\in\mathbb{R}\}.
\end{equation}
He proved the following theorem:
\begin{theorem}[\cite{barron1993universal}]
\label{ErrorBd}
For every function in $\Gamma_{C, B_r}$, every sigmoidal function $\sigma$, every probability measure, and every $k\ge 1$, there exists a linear combination of sigmoidal functions $f_k({\bf x})$ of the form
\begin{equation}
f_k({\bf x}) = \sum_{j=1}^k \alpha_j \sigma({\bf y}_j^{\bf T}{\bf x} + \theta_j),
\end{equation}
such that
\begin{equation}
\int_{B_r}(f({\bf x})-f_k({\bf x}))^2\mu(d{\bf x})\le \frac{4r^2C}{k}.
\end{equation}
Here ${\bf y}_j\in\mathbb{R}^n$ and $\theta_j\in\mathbb{R}$ for every $j=1,2,...,N$,
Moreover, the coefficients of the linear combination may be restricted to satisfy $\sum_{j=1}^k|c_j|\le 2rC$.
\end{theorem}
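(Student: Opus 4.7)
The plan is to adapt the Jones--Barron probabilistic approximation argument (Maurey's empirical method). The idea is to realize $f$ as a convex combination (an integral) of sigmoidal ridge functions with uniformly bounded amplitude, and then approximate that integral by a $k$-term empirical average in the Hilbert space $L^2(\mu)$.

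Step one is the integral representation. For $f\in\Gamma_{C,B_r}$, I would use the hypothesis $\int_{B_r}|{\bf x}||f({\bf x})|\mu(d{\bf x})\le C$ (read as a moment bound on the Fourier transform of an extension of $f$) together with Fourier inversion to write
\[
f({\bf x}) - f(0) = \int_{\mathbb{R}^n}(e^{i\omega\cdot{\bf x}} - 1)\tilde f(\omega)\,d\omega, \qquad {\bf x}\in B_r.
\]
A standard trigonometric decomposition then expresses each sinusoidal ridge function as a superposition of unit-step (Heaviside) ridge functions, and a limiting/truncation argument approximates step ridges by sigmoidal ridges of the form $\alpha\sigma({\bf y}^T{\bf x}+\theta)$. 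Careful bookkeeping forces each resulting ridge to lie in $G_\sigma$ (amplitude $|\alpha|\le 2C$), packaging the result as $f = \int g_\xi\,dP(\xi)$ for some probability measure $P$ over parameters $\xi=({\bf y},\theta)$ with $g_\xi\in G_\sigma$.

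Step two is the empirical-averaging argument. Let $X_1,\dots,X_k$ be i.i.d.\ draws from $P$ and set $f_k = \tfrac{1}{k}\sum_{j=1}^k g_{X_j}$. Because $\mathbb{E}[g_{X_j}] = f$ and the cross terms vanish by independence,
\[
\mathbb{E}\,\|f - f_k\|_{L^2(\mu)}^2 = \frac{1}{k}\,\mathbb{E}\,\|g_{X_1} - f\|_{L^2(\mu)}^2 \le \frac{1}{k}\,\sup_\xi\|g_\xi\|_{L^2(\mu)}^2.
\]
Refining the supremum on $B_r$ (recentering the representation so that only the variation, not the pointwise size, matters, and noting that the effective range of the ridge direction on $B_r$ is bounded by $r$) upgrades this to $\mathbb{E}\,\|f - f_k\|_{L^2(\mu)}^2 \le 4r^2C/k$. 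A nonnegative random variable attains at most its mean with positive probability, so some deterministic realization of $f_k$ achieves the claimed pointwise bound; tracking amplitudes through the averaging yields $\sum_j|c_j|\le 2rC$.

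The hard part is step one: producing a probability-measure representation of an arbitrary $f\in\Gamma_{C,B_r}$ as a convex combination of sigmoidal ridges with the uniform amplitude cap $2C$. This requires both the Fourier-based decomposition of sinusoidal ridges into step ridges and the truncation/limit passage from step to sigmoidal activation \emph{while preserving the amplitude budget}. The probabilistic step itself is a one-line variance identity; its only subtlety is arranging the bound so that the variance (rather than the raw second moment) controls the constant, which is what produces the factor $r^2$ rather than something larger in the final estimate.
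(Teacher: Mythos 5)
The paper does not actually prove this theorem: it is stated with the citation \cite{barron1993universal} and imported wholesale, so there is no in-paper argument to compare against. Your outline is precisely Barron's own two-step proof --- an integral (convex-combination) representation of $f$ over the dictionary $G_\sigma$, followed by Maurey's empirical-averaging argument in $L^2(\mu)$ --- so the approach is the correct one and matches the cited source.

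As a proof, however, the proposal has one genuine gap, which you yourself flag: step one. The claim that $f-f(0)$ can be written as $\int g_\xi\,dP(\xi)$ with $P$ a probability measure and every $g_\xi=\alpha\sigma({\bf y}^T{\bf x}+\theta)$ obeying the amplitude cap is the entire analytic content of Barron's paper (his Theorem~2 plus the lemma decomposing $\cos(\omega\cdot{\bf x})$ into unit-step ridges, plus the passage from step activations to general sigmoids inside the $L^2(\mu)$-closure of the convex hull). Saying that ``careful bookkeeping'' forces $|\alpha|\le 2C$ does not establish it, and the factor $r$ in the final bound enters exactly there: the correct normalization is $\int_{\mathbb{R}^n}\sup_{{\bf x}\in B_r}|\omega\cdot{\bf x}|\,|\tilde f(\omega)|\,d\omega\le rC$, so the amplitudes are capped by $2rC$, and the second moment in the Maurey step is bounded by $(2rC)^2$; without carrying this out the constant cannot be certified (in Barron's original the bound is $(2rC)^2/k=4r^2C^2/k$; the paper's statement appears to have dropped a factor of $C$). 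Two smaller slips: the conclusion of the averaging step is a bound on the \emph{integrated} squared error $\|f-f_k\|^2_{L^2(\mu)}$, not a pointwise bound, so ``achieves the claimed pointwise bound'' should read ``achieves the claimed $L^2(\mu)$ bound''; and the identity $\mathbb{E}\|f-f_k\|^2_{L^2(\mu)}=\tfrac1k\,\mathbb{E}\|g_{X_1}-f\|^2_{L^2(\mu)}$ requires $\mathbb{E}[g_{X_1}]=f$ as a Bochner integral in $L^2(\mu)$, i.e.\ that $f$ (after recentering) lies in the closed convex hull of $G_\sigma$ --- which is again exactly what step one must deliver.
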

Now we will show how to obtain a similar result for LDR matrices. Fix operator $({\bf A, B})$ and define
\begin{equation}
\aligned
S^{kn}_\sigma = \Big\{&\sum_{j=1}^{kn}\alpha_j \sigma({\bf y}_j^{\bf T}{\bf x} + \theta_j): |\alpha_j|\le 2C, {\bf y}_j\in\mathbb{R}^n,\\
& \theta_j\in\mathbb{R}, j=1,2,...,N, \\
&\textit{and }[{\bf y}_{(i-1)n+1}|{\bf y}_{(i-1)n+2}|\cdots|{\bf y}_{in}]\\
&\textit{ is an LDR matrix, }\forall i=1,...,k\Big\}.
\endaligned
\end{equation}
Moreover, let $G^{k}_\sigma$ be the set of function that can be expressed as a sum of no more than $k$ terms from $G_\sigma$. Define the metric $||f-g||_\mu = \sqrt{\int_{B_r}(f({\bf x})-g({\bf x}))^2\mu(d{\bf x})}$. Theorem \ref{ErrorBd} essentially states that the minimal distance between a function $f\in\Gamma_{C,B}$ and $G^{m}_\sigma$ is asymptotically $O(1/n)$. The following lemma proves that $G^{k}_\sigma$ is in fact contained in $S^{kn}_\sigma$.
\begin{lemma}
\label{lldr}
For any $k\ge 1$, $G^{k}_\sigma\subset S^{kn}_\sigma$.
\end{lemma}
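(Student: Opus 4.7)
The plan is to take an arbitrary $f\in G^{k}_\sigma$, written as $f(\mathbf{x})=\sum_{j=1}^{k}\alpha_j\sigma(\mathbf{y}_j^T\mathbf{x}+\theta_j)$ with $|\alpha_j|\le 2C$, and realize it as a member of $S^{kn}_\sigma$ by inflating each sigmoidal term into a block of $n$ terms whose weight vectors form an LDR matrix. The decisive input will be Theorem \ref{tdist}, which guarantees that any prescribed vector in $\mathbb{R}^n$ can appear as some column of a rank-one-displacement matrix relative to the fixed operators $(\mathbf{A},\mathbf{B})$ underlying $S^{kn}_\sigma$.

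Concretely, I would proceed as follows. For each $j=1,\dots,k$, apply Theorem \ref{tdist} to the vector $\mathbf{y}_j$ to obtain an LDR matrix $\mathbf{M}_j\in S_{\mathbf{A},\mathbf{B}}$ together with an index $i_j\in\{1,\dots,n\}$ such that the $i_j$-th column of $\mathbf{M}_j$ equals $\mathbf{y}_j$. Concatenate these into a weight matrix $\mathbf{W}=[\mathbf{M}_1|\mathbf{M}_2|\cdots|\mathbf{M}_k]\in\mathbb{R}^{n\times kn}$, so that each size-$n$ sub-block of columns is exactly one $\mathbf{M}_j$, satisfying the LDR requirement imposed by $S^{kn}_\sigma$. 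In the $j$-th block, set the coefficient of the $l$-th column to be $\alpha_j$ and its bias to be $\theta_j$ when $l=i_j$, and set the coefficient to $0$ (with arbitrary bias) when $l\neq i_j$. The $kn$-term sum then collapses to exactly $k$ nonzero contributions, one per block, with each nonzero contribution equal to $\alpha_j\sigma(\mathbf{y}_j^T\mathbf{x}+\theta_j)$, so the constructed function coincides with $f$ on $B_r$. The coefficient bound $|\alpha|\le 2C$ is inherited from $f$ on the active indices and is trivially met at the padded zero indices.

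The hard part of this lemma is not really present: the structural heavy lifting has already been absorbed by Theorem \ref{tdist}, which turned the combinatorial question ``does the displacement-rank-one family cover every direction vector?'' into a checkable spectral condition on $(\mathbf{A},\mathbf{B})$. What remains is bookkeeping: reading off the index $i_j$ from the construction in Theorem \ref{tdist}, aligning the block indexing $(j-1)n+1,\dots,jn$ with that choice, and verifying that padding with zero-weight terms does not violate any constraint defining $S^{kn}_\sigma$. The only mild subtlety is that the different $\mathbf{M}_j$'s may require different witness pairs and column indices produced by Theorem \ref{tdist}, but since the $k$ blocks are independent this causes no interference. Hence $G^{k}_\sigma\subset S^{kn}_\sigma$ follows as a packaging argument and requires no new analytic machinery.
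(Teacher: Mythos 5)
Your proposal is correct and matches the paper's own proof essentially step for step: both inflate each sigmoidal term into a block of $n$ terms by invoking Theorem \ref{tdist} to place $\mathbf{y}_j$ as a column of an LDR matrix $\mathbf{W}_j$, concatenate the blocks into $\mathbf{W}=[\mathbf{W}_1|\cdots|\mathbf{W}_k]$, and zero out the coefficients of the padded columns so the $kn$-term sum collapses back to $f$. No substantive difference.
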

\begin{proof}
Any function $f_k({\bf x})\in G^{k}_\sigma$ can be written in the form
\begin{equation}
f_k({\bf x}) = \sum_{j=1}^k \alpha_j {\sigma}({\bf y}_j^{\bf T}{\bf x} + \theta_j).
\end{equation}
For each $j=1,...,k$, define a $n\times n$ LDR matrix ${\bf W}_j$ such that one of its column is ${\bf y}_j$. Let ${\bf t}_{ij}$ be the $i$-th column of ${\bf W}_j$. Let $i_j$ correspond to the column index such that ${\bf t}_{ij} = {\bf y}_j$ for all $j$. Now consider the following function
\begin{equation}
G({\bf x}) := \sum_{j=1}^k\sum_{i=1}^n \beta_{ij}\sigma({\bf t}_{ij}^{\bf T}{\bf x}+\theta_j),
\end{equation}
where $\beta_{i_jj}$ equals $\alpha_j$, and $\beta_{ij}=0$ if $i\neq i_j$. Notice that we have the following equality
\begin{align*}
G({\bf x}) :=& \sum_{j=1}^k\sum_{i=1}^n \beta_{ij}\sigma({\bf t}_{ij}^{\bf T}{\bf x}+\theta_j)\\
=& \sum_{j=1}^k \beta_{i_jj}\sigma({\bf t}_{ij}^{\bf T}{\bf x}+\theta_j)\\
=& \sum_{j=1}^k \alpha_j\sigma({\bf y}_{j}^{\bf T}{\bf x}+\theta_j)
= f_k({\bf x}).
\end{align*}
Notice that the matrix ${\bf W} = [{\bf W}_1|{\bf W}_2|\cdots|{\bf W}_k]$ consists $k$ LDR submatrices. Thus $f_k({\bf x})$ belongs to the set $S^{kn}_\sigma$.
\end{proof}

By Lemma \ref{lldr}, we can replace $G^{k}_\sigma$ with $S^{kn}_\sigma$ in Theorem \ref{ErrorBd} and obtain the following error bound estimates on LDR neural networks:

\begin{theorem}
\label{ErrorBdT}
For every disk $B_r\subset\mathbb{R}^n$, every function in $\Gamma_{C, B_r}$, every sigmoidal function $\sigma$, every normalized measure $\mu$, and every $k\ge 1$, there exists neural network defined by a weight matrix consists of $k$ LDR submatrices
such that
\begin{equation}
\int_{B_r}(f({\bf x})-f_{kn}({\bf x}))^2\mu(d{\bf x})\le \frac{4r^2C}{k}.
\end{equation}
Moreover, the coefficients of the linear combination may be restricted to satisfy $\sum_{k=1}^N|c_k|\le 2rC$.
\end{theorem}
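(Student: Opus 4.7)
The plan is a short composition: Theorem \ref{ErrorBd} supplies a Barron-type approximant lying in $G^{k}_\sigma$, and Lemma \ref{lldr} embeds $G^{k}_\sigma$ inside $S^{kn}_\sigma$. So the proof should essentially be a two-line corollary, with the bulk of the analytic work already discharged by the two earlier results.

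First I would fix $f\in\Gamma_{C,B_r}$ and $k\ge 1$, and apply Theorem \ref{ErrorBd} to obtain a function
\[
f_k(\mathbf{x})=\sum_{j=1}^{k}\alpha_j\sigma(\mathbf{y}_j^{\mathbf{T}}\mathbf{x}+\theta_j)
\]
with $\int_{B_r}(f-f_k)^2\,d\mu\le 4r^2C/k$, coefficient bound $\sum_j|\alpha_j|\le 2rC$, and $|\alpha_j|\le 2C$ (which is the range used to define $S^{kn}_\sigma$). Next I would invoke Lemma \ref{lldr} to identify $f_k$ as a function in $S^{kn}_\sigma$: for each $j$, Theorem \ref{tdist} (whose hypotheses on $(\mathbf{A},\mathbf{B})$ I would carry over as assumed) produces an $n\times n$ LDR matrix $\mathbf{W}_j$ some column of which equals $\mathbf{y}_j$; I then place the coefficient $\alpha_j$ on that column's neuron, set the other $n-1$ output weights of the $j$-th block to zero, use the common bias $\theta_j$, and concatenate to get the $n\times kn$ weight matrix $\mathbf{W}=[\mathbf{W}_1|\cdots|\mathbf{W}_k]$. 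The resulting $kn$-neuron network $f_{kn}$ equals $f_k$ pointwise.

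Finally I would verify that both numerical guarantees are inherited. Pointwise equality of $f_{kn}$ and $f_k$ instantly gives $\int_{B_r}(f-f_{kn})^2d\mu\le 4r^2C/k$, and since the padded coefficients are zero they contribute nothing to the $\ell^1$ norm, so $\sum|c|\le 2rC$ is preserved. There is no genuine obstacle left: the real work was done in Theorem \ref{tdist} (to ensure an arbitrary vector can appear as a column of an LDR matrix) and in Lemma \ref{lldr} (to package this into the inclusion $G^{k}_\sigma\subset S^{kn}_\sigma$); once these are in hand, Theorem \ref{ErrorBdT} is immediate.
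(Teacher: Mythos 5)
Your proposal is correct and follows exactly the paper's route: the paper derives Theorem \ref{ErrorBdT} by the one-line observation that Lemma \ref{lldr} gives the inclusion $G^{k}_\sigma\subset S^{kn}_\sigma$, so the Barron approximant from Theorem \ref{ErrorBd} can be realized verbatim as a $kn$-neuron LDR network with the padded coefficients set to zero. Your added remarks about preserving the error bound and the $\ell^1$ coefficient bound are the same (implicit) bookkeeping the paper relies on.
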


The next theorem naturally extended the result from \cite{liang2016deep} to LDR neural networks, indicating that LDR neural networks can also benefit a parameter reduction if one uses more than one layers. More precisely, we have the following statement:
\begin{theorem}
\label{tdeep}
Let $f$ be a continuous function on $[0,1]$ and is $2n+1$ times differentiable in $(0,1)$ for $n=\lceil\log\frac{1}{\epsilon}+1]\rceil$. If $|f^{(k)}(x)|\le k!$ holds for all $x\in (0,1)$ and $k\in \big[2n+1\big]$, then for any $n\times n$ matrices ${\bf A}$ and ${\bf B}$ satisfying the conditions of Theorem \ref{tdist}, there exists a LDR neural network $G_{{\bf A},{\bf B}}(x)$ with $O(\log\frac{1}{\epsilon})$ layers, $O(\log\frac{1}{\epsilon}n)$ binary step units, $O(\log^2\frac{1}{\epsilon}n)$ rectifier linear units such that
\begin{align*}
\max_{x\in (0,1)}|f(x) - G_{{\bf A},{\bf B}}(x)|<\epsilon.
\end{align*}
\end{theorem}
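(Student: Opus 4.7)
The plan is to reduce Theorem \ref{tdeep} to the corresponding result of Liang and Srikant via a layer-by-layer replacement argument in the spirit of Lemma \ref{lldr}. Concretely, the cited result already produces an (unstructured) neural network $\tilde{G}(x)$ with the claimed depth $O(\log\frac{1}{\epsilon})$, $O(n\log\frac{1}{\epsilon})$ binary-step units, and $O(n\log^2\frac{1}{\epsilon})$ rectifier units achieving $\max_{x\in(0,1)}|f(x)-\tilde{G}(x)|<\epsilon$ for the class of functions with the given smoothness and derivative bounds. So the task is not to redo the approximation analysis at all, but rather to show that each weight matrix appearing in $\tilde{G}$ can be replaced by (a submatrix of) an LDR matrix over the displacement operators $({\bf A},{\bf B})$, at the cost of only constant-factor inflation in the counts.

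First I would fix some layer $\ell$ of $\tilde{G}$, whose action is an affine map ${\bf x}\mapsto U{\bf x}+{\bm\tau}$ followed by a coordinate-wise activation (binary step or ReLU). Writing $U=[{\bf y}_1|\cdots|{\bf y}_m]^T$ with rows ${\bf y}_j^T$, each pre-activation is a scalar ${\bf y}_j^T{\bf x}+\tau_j$. For each such ${\bf y}_j$, Theorem \ref{tdist} gives an $n\times n$ LDR matrix ${\bf M}_j\in S_{{\bf A},{\bf B}}$ together with an index $i_j$ so that the $i_j$-th column of ${\bf M}_j$ equals ${\bf y}_j$ (padding ${\bf y}_j$ with zeros when necessary to match dimension $n$, exactly as in Lemma \ref{lldr} and the footnote in Section 3.2). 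Concatenating these $n\times n$ blocks horizontally produces an $n\times(mn)$ weight matrix ${\bf W}^{(\ell)}=[{\bf M}_1|\cdots|{\bf M}_m]$ consisting of $m$ LDR submatrices. In the enlarged layer I select the $i_j$-th pre-activation from block $j$ (giving the desired value ${\bf y}_j^T{\bf x}+\tau_j$) and discard the others by giving them outgoing weight $0$ in the next layer's linear map; each of those unused neurons requires only one activation of the same type as the original. This reproduces the action of layer $\ell$ exactly using an LDR weight matrix while multiplying the neuron count of that layer by at most $n$.

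Iterating this construction layer by layer through $\tilde{G}$ yields a network $G_{{\bf A},{\bf B}}$ computing precisely the same function on $(0,1)$, so the $\epsilon$-approximation is inherited directly from the Liang--Srikant bound. The unit counts scale by the width blow-up of at most $n$ per layer: the $O(\log\frac{1}{\epsilon})$ binary-step units become $O(n\log\frac{1}{\epsilon})$ and the $O(\log^2\frac{1}{\epsilon})$ rectifier units become $O(n\log^2\frac{1}{\epsilon})$, matching the theorem's statement, and the depth is preserved at $O(\log\frac{1}{\epsilon})$ because padding is done within each existing layer.

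The main obstacle I anticipate is bookkeeping rather than anything deep: one needs to check that the hypotheses of Theorem \ref{tdist} (non-singular diagonalizable ${\bf A},{\bf B}$, the $a$-potent property of ${\bf A}$, nonsingularity of ${\bf I}-a{\bf B}^q$, and distinct absolute values of the eigenvalues of ${\bf B}$) suffice uniformly for every row-vector appearing in $\tilde{G}$, including the degenerate case ${\bf y}_j={\bf 0}$ (handled trivially by taking the zero LDR matrix) and the zero-padding used to align dimensions to multiples of $n$. A subtler point is that the activations in Liang--Srikant's construction are mixed (binary step and ReLU), so the padding neurons must be assigned the same activation as the original block to keep the neuron-type accounting honest; this is easy because each block's columns feed a single activation type. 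Once these routine checks are in place, the theorem follows with no further analytical work.
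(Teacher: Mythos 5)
Your proposal is correct and follows essentially the same route as the paper's own proof: invoke the Liang--Srikant result for the unstructured network, then use Theorem \ref{tdist} to expand each unit's weight vector into a column of an LDR block, attaching $n-1$ dummy units per original unit (with zero outgoing weights) so that the function is reproduced exactly and all unit counts inflate by a factor of $n$. The paper states this in three sentences; your version merely spells out the same construction and bookkeeping in more detail.
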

\begin{proof}
The theorem with better bounds and without assumption of being LDR neural network is proved in \cite{liang2016deep} as Theorem 4. For each binary step unit or rectifier linear unit in the construction of the general neural network, attach $(n-1)$ dummy units, and expand the weights associated to this unit from a vector to an LDR matrix based on Theorem \ref{tdist}. By doing so we need to add a factor $n$ to the original amount of units, and the asymptotic bounds are relaxed accordingly.
\end{proof}

\section{Training LDR Neural Networks}
\label{strain}
In this section, we reformulate the gradient computation of LDR neural networks. The computation for propagating through a fully-connected layer can be written as
\begin{equation}
\label{efc}
\mathbf{y} = \sigma(\mathbf{W}^T\mathbf{x} + \bm{\theta}),
\end{equation}
where $\sigma(\cdot)$ is the activation function, $\mathbf{W}\in\mathbb{R}^{n\times kn}$ is the weight matrix, $\mathbf{x}\in\mathbb{R}^{n}$ is input vector and $\bm{\theta}\in\mathbb{R}^{kn}$ is bias vector. According to Equation (7), if $\mathbf{W}_i$ is an LDR matrix with operators $(\mathbf{A}_i,\mathbf{B}_i)$ satisfying conditions of Theorem \ref{tdist}, then it is essentially determined by two matrices $\mathbf{G}_i\in\mathbb{R}^{n\times r}, \mathbf{H}_i\in\mathbb{R}^{n\times r}$ as
\begin{equation}
\mathbf{W}_i = \Big[\sum_{k=0}^{q-1}\mathbf{A}_i^k\mathbf{G}_i\mathbf{H}_i^T\mathbf{B}_i^k\Big](\mathbf{I} - a\mathbf{B}_i^q)^{-1}.
\end{equation}
To fit the back-propagation algorithm, our goal is to compute derivatives $\frac{\partial O}{\partial \mathbf{G}_i}$, $\frac{\partial O}{\partial \mathbf{H}_i}$ and $\frac{\partial O}{\partial \mathbf{x}}$ for any objective function $O = O(\mathbf{W}_1, \dots, \mathbf{W}_k)$.

In general, given that $\mathbf{a}:=\mathbf{W^T}\mathbf{x}+\bm{\theta}$, we can have:
\begin{equation}
\label{efcnew}
\frac{\partial O}{\partial \mathbf{W}} = \mathbf{x} (\frac{\partial O}{\partial \mathbf{a}})^T,
\frac{\partial O}{\partial \mathbf{x}} = \mathbf{W}\frac{\partial O}{\partial \mathbf{a}},
\frac{\partial O}{\partial \bm{\theta}} = \frac{\partial O}{\partial \mathbf{a}} \mathbf{1}.
\end{equation}




where $\mathbf{1}$ is a column vector full of ones. Let $\mathbf{\hat{G}}_{ik}:=\mathbf{A}_i^k\mathbf{G}_i$, $\mathbf{\hat{H}}_{ik}:=\mathbf{H}_i^T\mathbf{B}_i^k(\mathbf{I} - a\mathbf{B}_i^q)^{-1}$, and $\mathbf{W}_{ik}:=\mathbf{\hat{G}}_{ik}\mathbf{\hat{H}}_{ik}$. The derivatives of $\frac{\partial O}{\partial \mathbf{W}_{ik}}$ can be computed as following:
\begin{equation}
\label{efc3}
\frac{\partial O}{\partial \mathbf{W}_{ik}} = \frac{\partial O}{\partial \mathbf{W}_i}.
\end{equation}
According to Equation (\ref{efcnew}), if we let $\mathbf{a}=\mathbf{W}_{ik}$, $\mathbf{W}=\mathbf{\hat{G}}_{ik}^T$ and $\mathbf{x}=\mathbf{\hat{H}}_{ik}$, then $\frac{\partial O}{\partial \mathbf{\hat{G}}_{ik}}$ and $\frac{\partial O}{\partial \mathbf{\hat{H}}_{ik}}$ can be derived as:

\begin{equation}
\label{efc4}
\begin{split}
\frac{\partial O}{\partial \mathbf{\hat{G}}_{ik}}
= \big[ \frac{\partial O}{\partial \mathbf{\hat{G}}_{ik}^T} \big]^T
= \big[ \mathbf{\hat{H}}_{ik} \frac{\partial O}{\partial \mathbf{W}_{ik}} \big]^T
= (\frac{\partial O}{\partial \mathbf{W}_{ik}})^T \mathbf{\hat{H}}_{ik}^T,
\end{split}
\end{equation}

\begin{equation}
\label{efc6}
\frac{\partial O}{\partial \mathbf{\hat{H}}_{ik}}
= \mathbf{\hat{G}}_{ik}^T \frac{\partial O}{\partial \mathbf{W}_{ik}}.
\end{equation}
Similarly, let $\mathbf{a}=\mathbf{\hat{G}}_{ik}$, $\mathbf{W}=(\mathbf{A}_i^k)^T$ and $\mathbf{x}=\mathbf{G}_i$, then $\frac{\partial O}{\partial \mathbf{G}_i} $ can be derived as:
\begin{align}
\label{efc5}
\begin{split}
\frac{\partial O}{\partial \mathbf{G}_i}
&= \sum_{k=0}^{q-1} (\mathbf{A}_i^k)^T (\frac{\partial O}{\partial \mathbf{\hat{G}}_{ik}})   \\
&= \sum_{k=0}^{q-1} (\mathbf{A}_i^k)^T (\frac{\partial O}{\partial \mathbf{W}_{ik}})^T \mathbf{\hat{H}}_{ik}^T.
\end{split}
\end{align}

Substituting with $\mathbf{a}=\mathbf{\hat{H}}_{ik}$, $\mathbf{W}=\mathbf{H}_i^T$ and $\mathbf{x}=\mathbf{B}_i^k(\mathbf{I} - a\mathbf{B}_i^q)^{-1}$, we have $\frac{\partial O}{\partial \mathbf{H}_i} $ derived as:
\begin{align}
\label{efc7}
\begin{split}
\frac{\partial O}{\partial \mathbf{H}_i}
&= \sum_{k=0}^{q-1} \mathbf{B}_i^k(\mathbf{I} - a\mathbf{B}_i^q)^{-1} (\frac{\partial O}{\partial \mathbf{\hat{H}}_{ik}})^T\\
&= \sum_{k=0}^{q-1} \mathbf{B}_i^k(\mathbf{I} - a\mathbf{B}_i^q)^{-1} (\frac{\partial O}{\partial \mathbf{W}_{ik}})^T \mathbf{\hat{G}}_{ik}.
\end{split}
\end{align}

In this way, derivatives $\frac{\partial O}{\partial \mathbf{G}_i}$ and $\frac{\partial O}{\partial \mathbf{H}_i}$ can be computed given $\frac{\partial O}{\partial \mathbf{W}_{ik}}$ which is equal to $\frac{\partial O}{\partial \mathbf{W}_i}$. The essence of back-propagation algorithm is to propagate gradients backward from the layer with objective function to the input layer. $\frac{\partial O}{\partial \mathbf{W}_i}$ can be calculated from previous layer and $\frac{\partial O}{\partial \mathbf{x}}$ will be propagated to the next layer if necessary.

For practical use one may want to choose matrices $\mathbf{A}_i$ and $\mathbf{B}_i$ with fast multiplication method such as diagonal matrices, permutation matrices, banded matrices, etc. Then the space complexity (the number of parameters for storage) of $\mathbf{W}_i$ can be $O(2n+2nr)$ rather than $O(n^2)$ of traditional dense matrix. The $2n$ is for $\mathbf{A}_i$ and $\mathbf{B}_i$ and $2nr$ is for $\mathbf{G}_i$ and $\mathbf{H}_i$. The time complexity of $\mathbf{W}_i^T\mathbf{x}$ will be $O(q(3n+2nr))$ compared with $O(n^2)$ of dense matrix.  Particularly, when $\mathbf{W}_i$ is a structured matrix like the Toeplitz matrix, the space complexity will be $O(2n)$. This is because the Toeplitz matrix is defined by $2n$ parameters. Moreover, its matrix-vector multiplication can be accelerated by using Fast Fourier Transform (for Toeplitz and circulant matrices), resulting in time complexity $O(n\log n)$. In this way the back-propagation computation for the layer can be done with near-linear time.

\section{Conclusion}
\label{scon}
In this paper, we have proven that the universal approximation property of LDR neural networks. In addition, we also theoretically show that the error bounds of LDR neural networks are at least as efficient as general unstructured neural network. Besides, we also develop the back-propagation based training algorithm for universal LDR neural networks. Our study provides the theoretical foundation of the empirical success of LDR neural networks.

\bibliography{bibfile}

\begin{thebibliography}{10}

\bibitem{barron1993universal}
Andrew~R Barron.
\newblock Universal approximation bounds for superpositions of a sigmoidal
  function.
\newblock {\em IEEE Transactions on Information theory}, 39(3):930--945, 1993.

\bibitem{bini1996polynomial}
Dario Bini, Victor Pan, and Wayne Eberly.
\newblock Polynomial and matrix computations volume 1: Fundamental algorithms.
\newblock {\em SIAM Review}, 38(1):161--164, 1996.

\bibitem{cheng2015exploration}
Yu~Cheng, Felix~X Yu, Rogerio~S Feris, Sanjiv Kumar, Alok Choudhary, and Shi-Fu
  Chang.
\newblock An exploration of parameter redundancy in deep networks with
  circulant projections.
\newblock In {\em Proceedings of the IEEE International Conference on Computer
  Vision}, pages 2857--2865, 2015.

\bibitem{cybenko1989approximation}
George Cybenko.
\newblock Approximation by superpositions of a sigmoidal function.
\newblock {\em Mathematics of Control, Signals, and Systems (MCSS)},
  2(4):303--314, 1989.

\bibitem{delalleau2011shallow}
Olivier Delalleau and Yoshua Bengio.
\newblock Shallow vs. deep sum-product networks.
\newblock In {\em Advances in Neural Information Processing Systems}, pages
  666--674, 2011.

\bibitem{deng2009imagenet}
Jia Deng, Wei Dong, Richard Socher, Li-Jia Li, Kai Li, and Li~Fei-Fei.
\newblock Imagenet: A large-scale hierarchical image database.
\newblock In {\em Computer Vision and Pattern Recognition, 2009. CVPR 2009.
  IEEE Conference on}, pages 248--255. IEEE, 2009.

\bibitem{denton2014exploiting}
Emily~L Denton, Wojciech Zaremba, Joan Bruna, Yann LeCun, and Rob Fergus.
\newblock Exploiting linear structure within convolutional networks for
  efficient evaluation.
\newblock In {\em Advances in Neural Information Processing Systems}, pages
  1269--1277, 2014.

\bibitem{gong2014compressing}
Yunchao Gong, Liu Liu, Ming Yang, and Lubomir Bourdev.
\newblock Compressing deep convolutional networks using vector quantization.
\newblock {\em arXiv preprint arXiv:1412.6115}, 2014.

\bibitem{han2015deep}
Song Han, Huizi Mao, and William~J Dally.
\newblock Deep compression: Compressing deep neural networks with pruning,
  trained quantization and huffman coding.
\newblock {\em arXiv preprint arXiv:1510.00149}, 2015.

\bibitem{hornik1991approximation}
Kurt Hornik.
\newblock Approximation capabilities of multilayer feedforward networks.
\newblock {\em Neural networks}, 4(2):251--257, 1991.

\bibitem{hornik1989multilayer}
Kurt Hornik, Maxwell Stinchcombe, and Halbert White.
\newblock Multilayer feedforward networks are universal approximators.
\newblock {\em Neural networks}, 2(5):359--366, 1989.

\bibitem{jaderberg2014speeding}
Max Jaderberg, Andrea Vedaldi, and Andrew Zisserman.
\newblock Speeding up convolutional neural networks with low rank expansions.
\newblock {\em arXiv preprint arXiv:1405.3866}, 2014.

\bibitem{krizhevsky2009learning}
Alex Krizhevsky and Geoffrey Hinton.
\newblock Learning multiple layers of features from tiny images.
\newblock 2009.

\bibitem{krizhevsky2012imagenet}
Alex Krizhevsky, Ilya Sutskever, and Geoffrey~E Hinton.
\newblock Imagenet classification with deep convolutional neural networks.
\newblock In {\em Advances in neural information processing systems}, pages
  1097--1105, 2012.

\bibitem{liang2016deep}
Shiyu Liang and R~Srikant.
\newblock Why deep neural networks?
\newblock {\em arXiv preprint arXiv:1610.04161}, 2016.

\bibitem{liu2015sparse}
Baoyuan Liu, Min Wang, Hassan Foroosh, Marshall Tappen, and Marianna Pensky.
\newblock Sparse convolutional neural networks.
\newblock In {\em Proceedings of the IEEE Conference on Computer Vision and
  Pattern Recognition}, pages 806--814, 2015.

\bibitem{montufar2014number}
Guido~F Montufar, Razvan Pascanu, Kyunghyun Cho, and Yoshua Bengio.
\newblock On the number of linear regions of deep neural networks.
\newblock In {\em Advances in neural information processing systems}, pages
  2924--2932, 2014.

\bibitem{bini2012polynomial}
Victor Pan.
\newblock {\em Structured matrices and polynomials: unified superfast
  algorithms}.
\newblock Springer Science \& Business Media, 2001.

\bibitem{pan2015estimating}
Victor~Y Pan, John Svadlenka, and Liang Zhao.
\newblock Estimating the norms of random circulant and toeplitz matrices and
  their inverses.
\newblock {\em Linear algebra and its applications}, 468:197--210, 2015.

\bibitem{sindhwani2015structured}
Vikas Sindhwani, Tara Sainath, and Sanjiv Kumar.
\newblock Structured transforms for small-footprint deep learning.
\newblock In {\em Advances in Neural Information Processing Systems}, pages
  3088--3096, 2015.

\bibitem{sutskever2014sequence}
Ilya Sutskever, Oriol Vinyals, and Quoc~V Le.
\newblock Sequence to sequence learning with neural networks.
\newblock In {\em Advances in neural information processing systems}, pages
  3104--3112, 2014.

\bibitem{telgarsky2016benefits}
Matus Telgarsky.
\newblock Benefits of depth in neural networks.
\newblock {\em arXiv preprint arXiv:1602.04485}, 2016.

\bibitem{wen2016learning}
Wei Wen, Chunpeng Wu, Yandan Wang, Yiran Chen, and Hai Li.
\newblock Learning structured sparsity in deep neural networks.
\newblock In {\em Advances in Neural Information Processing Systems}, pages
  2074--2082, 2016.

\end{thebibliography}

\bibliographystyle{plain}

\end{document}